\newtcolorbox{mistake}[1][]{
  colback=gray!5,          
  colframe=gray!80,        
  fonttitle=\bfseries,    
  title=Prompt,            
  left=2mm, right=2mm, top=2mm, bottom=2mm,
  boxrule=0.8pt,
  breakable,
  rounded corners,          
  #1                       
}
\newtheorem{theorem}{Theorem}
\newtheorem{example}{Example}
\title{To Err Is Human: Systematic Quantification of Errors in Published AI Papers via LLM Analysis}
\def\@fnsymbol#1{\ensuremath{\ifcase#1\or \or \or \or \or \or \or \or \or \else\@ctrerr\fi}}
\begin{document}

\author{%
  Federico Bianchi$^{\scriptstyle \dagger}$\,$^{1}$, 
  Yongchan Kwon$^{\scriptstyle \dagger}$\,$^{1}$, 
  Zachary Izzo$^{\scriptstyle \dagger}$\,$^{2}$, 
  Linjun Zhang$^{3}$, 
  James Zou$^{1,4}$%
  \thanks{%
    $^{\scriptstyle \dagger}$Equal Contribution.
    $^{1}$Together AI,
    $^{2}$NEC Labs America,
    $^{3}$Rutgers University,
    $^{4}$Stanford University.
  }%
}
\date{}

{\hypersetup{hidelinks}\maketitle}

\vspace{-40pt}
\begin{abstract}
  How many mistakes do published AI papers contain? Peer-reviewed publications form the foundation upon which new research and knowledge are built. Errors that persist in the literature can propagate unnoticed, creating confusion in follow-up studies and complicating reproducibility. The accelerating pace of research and the increasing demands on the peer-review system make such mistakes harder to detect and avoid. To address this, we developed a Paper Correctness Checker based on GPT-5 to systematically identify mistakes in papers previously published at top AI conferences and journals. Our analysis focuses on objective mistakes—e.g., errors in formulas, derivations, calculations, figures, and tables—that have a clearly verifiable ground truth. We intentionally exclude subjective considerations such as novelty, importance, or writing quality. We find that published papers contain a non-negligible number of objective mistakes and that the average number of mistakes per paper has increased over time—from 3.8 in NeurIPS 2021 to 5.9 in NeurIPS 2025 ($55.3\%$ increase); from 4.1 in ICLR 2018 to 5.2 in ICLR 2025; and from 5.0 in TMLR 2022/23 to 5.5 in TMLR 2025. Human experts reviewed 316 potential mistakes identified by the AI Checker and confirmed that 263 were actual mistakes, corresponding to a precision of 83.2\%. While most identified issues are relatively minor, correcting them would reduce confusion in the literature and strengthen reproducibility. The AI Checker also surfaced potentially more substantive mistakes that could affect the interpretation of results. Moreover, we show that the AI Checker can propose correct fixes for 75.8\% of the identified mistakes. Overall, this study highlights the potential of frontier LLMs to detect and correct objective mistakes in published papers, helping to establish a firmer foundation of knowledge.  
\end{abstract}

\begin{figure}[!h]
    \centering
    \includegraphics[width=0.7\linewidth]{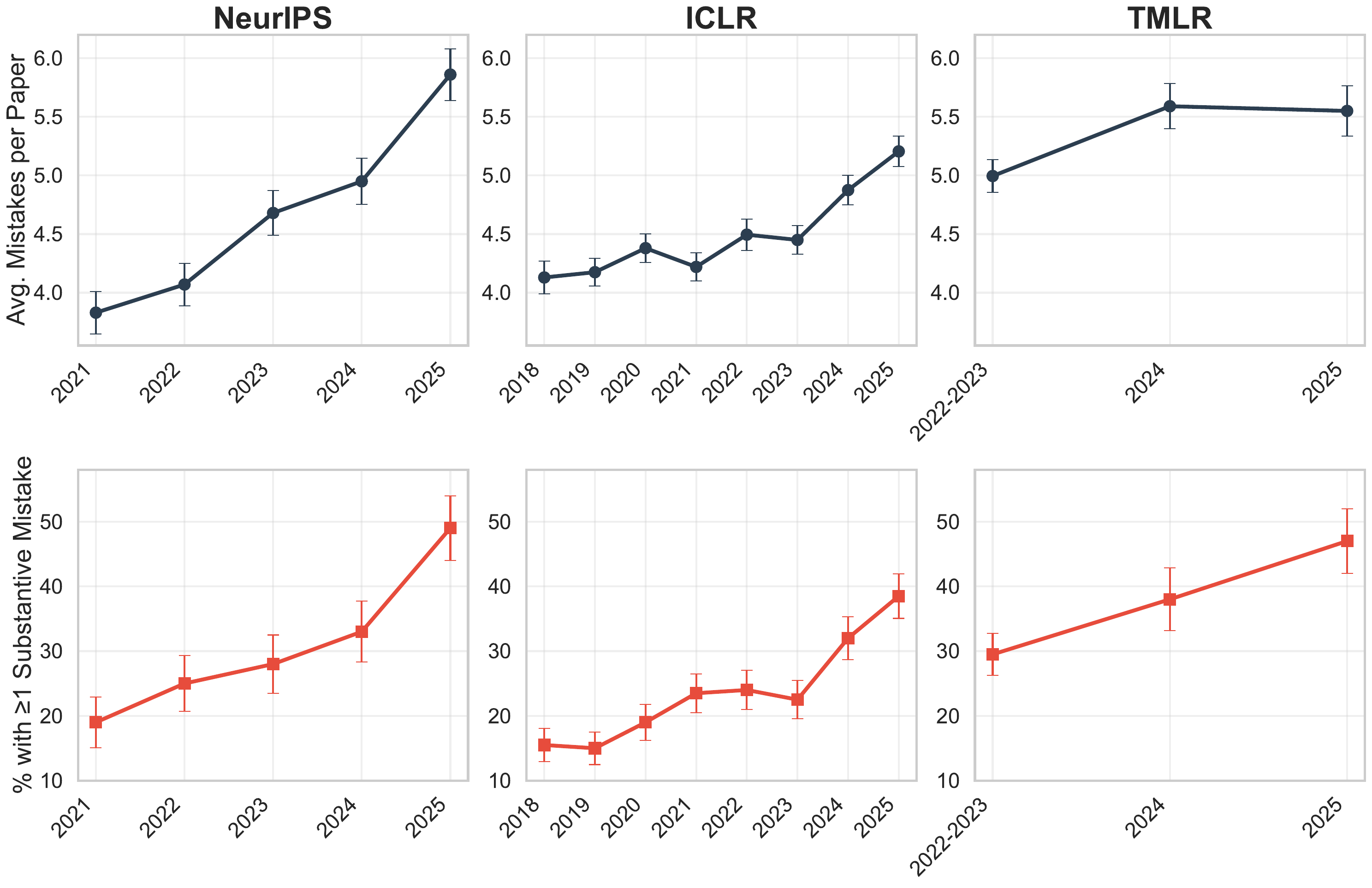}
    \vspace{-10pt}
    \caption{AI Checker detected mistakes for papers published in NeurIPS (left column), ICLR (middle), and TMLR (right) across years. The top row shows the average number of detected mistakes per paper, while the bottom row shows the percentage of papers with $\geq 1$ potentially substantive mistake. Error bars represent the standard errors.}
    \label{fig:error_rates}
\end{figure}

\section{Introduction}

The pace of publishing in AI  has accelerated dramatically. For instance, the number of submissions to ICLR grew from 1,013 in 2018 to 19,619 in 2026, reflecting an unprecedented expansion in both research activity and publication pressure \citep{papercopilot_iclr_stats_2025}. As the field scales, papers are often assembled rapidly in response to competitive deadlines, iterative experimentation, and the drive to disseminate results quickly.

Yet a modern AI paper is a complex, multi-component artifact. It typically weaves together mathematical derivations, algorithmic design, empirical evaluations, and claims about performance or methodology. With so many moving parts—and with authors frequently racing against time—there are numerous opportunities for mistakes to slip in. At the same time, peer reviewers face overwhelming workloads and shrinking bandwidth. Even conscientious reviewers may fail to identify subtle but important mistakes in logic, mathematics, or experimental reporting.

Revealing and correcting mistakes in previously published papers matters because scientific knowledge is cumulative: new models, benchmarks, and theoretical results often build directly on past work. When mistakes go unnoticed, they can propagate through follow-up research, distort empirical comparisons, or anchor entire lines of inquiry on incorrect assumptions. Published papers also guide real-world deployments and industry practices, amplifying the consequences of errors that remain embedded in the literature. By systematically identifying and documenting such mistakes, we gain a clearer picture of the reliability of our research foundations, strengthen reproducibility efforts, and highlight where peer-review processes may need reinforcement. 

In this work, we take a first step toward systematically quantifying the prevalence of  mistakes in published AI research. To enable consistent and verifiable assessment, we restrict our focus to objective errors—those that have a clear right-or-wrong answer—such as wrong calculations, incorrect formulas, logical contradictions, factual inaccuracies, and discrepancies between claims and reported results. We intentionally exclude subjective aspects of peer review, including judgments about novelty, significance, clarity, or writing quality, as well as surface issues like grammar and typos.

To carry out this analysis, we developed an AI Correctness Checker using GPT-5, designed to identify objective mistakes in research papers and suggest potential fixes. We validated the system by having human experts verify $316$ identified mistakes in published papers, demonstrating that the Correctness Checker has a precision of $83.2\%$. 

We then applied this LLM-based reviewer to $2,500$ randomly sampled papers published across three major machine learning venues: ICLR (2018–2025), NeurIPS (2021–2025), and TMLR (2022-2025). Across this corpus, the system identified an average of $4.7$ objective mistakes per paper and this number has increased over time. The most common issues were mathematical mistakes ($54.0\%$), including incorrect formulas, invalid derivations, and logical flaws in proofs or algorithmic reasoning.

\paragraph{Related work} 
Substantial human effort is currently devoted to identifying and correcting mistakes in published scientific literature. Platforms such as PubPeer\footnote{\url{https://pubpeer.com/}} and Retraction Watch\footnote{\url{https://retractionwatch.com/}} provide space for researchers to publicly document concerns and keep track of retracted papers. Journals also publish author-initiated corrections, editorial notes, or retraction notices when errors are discovered post-publication. Researchers have categorized the types of mistakes in publications~\cite{brewin2023inaccuracy,darling2024statistical}. Researchers have also investigated errors in machine learning experiments in a small scale assessment of 49 papers~\cite{shepperd2019prevalence}. All of these processes are performed manually, relying on the time, expertise, and vigilance of human reviewers.
Our work investigates whether frontier LLMs can assist with this effort by automatically detecting errors and suggesting possible fixes. Moreover, systematic and quantitative analysis of mistakes in peer-reviewed AI papers is lacking. Our study aims to fill this gap by combining large-scale automated analysis with human validation.

Prior work has explored LLMs as automated reviewers~\cite{liang2024can,zhou-etal-2024-llm,10.1016/j.inffus.2025.103332, xu2025can}, but these systems largely mirror human review criteria—clarity, novelty, significance, and overall writing quality. Some works have focused on detecting errors specifically in references~\cite{zhang2024detecting}. Only a few papers evaluated the ability of LLMs to detect errors~\cite{liu2023reviewergpt,xi2025flaws, li2024evaluating}. Overall the work on using LLMs to review papers is still nascent and tends to focus on holistic evaluation rather than quantifying objective mistakes. A concurrent work similar in spirit, albeit in a very different domain, uses LLMs to detect mathematical errors in 120 neurosurgery papers~\cite{ali2025}.  
LLMs have been used to support the peer review process by providing feedback to human reviewers~\cite{Thakkar2025CanLF} and as general reviewing systems~\cite{bianchi2025exploring}. Existing efforts have also focused mostly on new manuscripts, aiming to assist authors or program committees during the submission process. In contrast, little work has examined whether LLMs can be used to retrospectively audit published papers or to quantify the prevalence of verifiable errors within the established AI literature.

Our work departs from these prior approaches in two key ways: we concentrate specifically on identifying and fixing objective, correctness-based errors, and we apply our system to previously published NeurIPS, ICLR, and TMLR papers to systematically measure error rates at scale.

\section{AI Correctness Checker methodology}

\subsection{AI Correctness Checker pipeline}

Our Correctness Checker is composed of several modules. A first LLM works as an error detector that focuses on identifying objective mistakes like mathematical errors, logical contradictions, and miscalculation. Then, a second LLM double checks each of the mistakes and removes false positives. It also flags mistakes that are potentially more substantive, which are defined to be mistakes that could potentially change some of the paper's results, alter interpretation of the findings, or lead to non-obvious confusion for a typical reader. Examples of substantive mistakes include invalid proofs, incorrect formulas in main results, etc. Mistakes that have more limited impact, such as incorrect cross-references, wrong notations, or typos in formula that do not affect the main results, are not flagged as substantive. Both LLMs are instructed to not comment on subjective issues such as experimental design choices, writing quality, and novelty. We use GPT-5 with medium reasoning effort for both reviewers. We created an additional module using GPT-5-mini to annotate the category of each mistake: mistake in table/figure; mistake in math/formula; mistake in cross-reference; or mistake in text (see Table~\ref{tab:error_cateogry}). This pipeline takes as input the PDF and a single issue and returns one of the four possible categories. Finally, we also implemented a component to suggest possible fixes to the identified mistakes using GPT-5, which provides concrete corrections for clearly safe changes (e.g., reference errors, figure mislabeling, obvious typos) while responding with ``No immediate fix'' for issues that can not be resolved or would require substantial rewriting of the paper.

\begin{table}[t]
\centering
%
\begin{tabular}{lp{12cm}}
\toprule
\textbf{Category} & \textbf{Common Examples} \\ 
\midrule
mistake in table/figure & wrong calculations in tables/figures, mistakes in captions, inconsistencies between table/figure and text  \\[5pt] 
mistake in math/formula & typos in equations, incorrect derivations or logic in proofs, contradictory assumptions \\[5pt] 
mistake in cross-reference &  incorrect references to equations, tables, or figures \\[5pt] 
mistake in text &  incorrect or logically imprecise explanation or definition  \\ 
\bottomrule
\end{tabular}
\caption{The four mistake categories and their common examples. Each category is defined by where the corresponding issue appears, and each category has a fixed precedence: when an issue fits multiple categories, the one with higher precedence is applied first. The list is ordered from highest to lowest precedence.}
\label{tab:error_cateogry}
\end{table}

\subsection{Sampling of published AI papers}
NeurIPS, ICLR, and TMLR are among the most influential and popular venues in AI research, and together provide a representative snapshot of current methodological and scientific trends in the field. These conferences and the journal also share the advantage of using standardized publishing infrastructure through OpenReview\footnote{\url{http://openreview.net/}}, which allows us to reliably download manuscripts in consistent formats. In addition, papers published in these venues are single-column, a layout that is more compatible with optical character recognition (OCR) pipelines and less likely to introduce parsing errors in our experience. For this reason, we also avoid older NeurIPS papers, as PDF compilation differences in earlier years can lead to additional OCR artifacts. Further, to reduce computational load and minimize extraction failures, we exclude papers larger than 10 MB.

Using the OpenReview API, we randomly sampled and downloaded 1600 published papers from ICLR 2018–2025, 500 papers from NeurIPS 2021–2025, and 400 papers from TMLR 2022–2025. The papers are uniformly distributed over the years. We do not differentiate by presentation type (e.g., spotlight, oral, or poster). Because TMLR first started in spring 2022, we grouped TMLR papers from 2022 and 2023 together for downstream analysis. 

Most papers in these venues include appendices, but submission rules governing appendix placement have varied over time. ICLR has generally asked authors to embed appendices directly within the main PDF, while NeurIPS only mandated this practice beginning in 2024–2025. These differences can create inconsistencies in document structure and formatting. To ensure that our findings are not confounded by such policy shifts, we also present a separate analysis for NeurIPS in which we restrict our evaluation to the first 10 pages of each paper (published NeurIPS papers have up to 10 pages of main content). This provides a consistent content window across different years regardless of whether appendices are included in the main file.\footnote{For experiments limited to the first 10 pages of NeurIPS papers, we slightly adjust the evaluation prompt to note that appendices may not be included and that references may be truncated. We instruct the LLM not to penalize papers for these omissions.}

\subsection{Estimating precision and recall of the AI Correctness Checker}

To assess the reliability of the AI Correctness Checker, we measure its precision and recall using human verification. Throughout this paper, we define \emph{precision} as the proportion of issues flagged by the Checker that correspond to actual mistakes, and \emph{recall} as the proportion of all actual mistakes in a paper that are successfully detected by the AI Checker.  We elaborate on the experimental settings below.

For precision, we first randomly sample 60 published papers that contain at least one potentially substantive mistake as flagged by the AI Checker. The checker detected a total of 316 potential mistakes in these papers.
The selected papers are randomly assigned to the authors of this manuscript, and human annotators independently review every identified issue in the papers assigned to them.
The human annotators determined whether each issue is a real mistake or a false positive (i.e. the original paper is correct). They also annotated whether the mistake is potentially substantive, as defined above, and categorized the type of mistake as Math/Formula, Text, Table/Figure, or Cross-reference error. 

For recall, we consider a controlled setting in which we intentionally inject errors into several papers and measure how often the AI Checker successfully flags them. Specifically, we select five papers accepted to NeurIPS (2021–2025) or ICLR (2018–2025) that include at least one of our authors as a co-author \citep{izzo2021dimensionality, kwon2022weightedshap, yuksekgonul2023when, jiang2023opendataval, kwon2024datainf}. For each paper, we create three mistake-injected copies, each containing six mistakes. This yields a total of 15 mistake-injected papers with 90 different injected mistakes, which we then feed into the AI Checker pipeline. We run the AI Checker pipeline three independent times for each mistake-injected copy and manually review the AI Checker’s reports. Then, the average recall is computed based solely on the injected mistakes and does not account for potentially pre-existing mistakes.

The injected mistakes are distributed across different sections of each paper and are crafted to span all four error categories listed in Table~\ref{tab:error_cateogry}. They also cover a range of difficulty levels: some require little to no specialized expertise—such as miscalculations, incorrect citations, or formula typos—whereas others involve more advanced mathematical concepts or domain knowledge (e.g., the definition of a Wasserstein barycenter or the derivation of influence functions).

\begin{figure}[!h]
    \centering
    \includegraphics[width=1\linewidth]{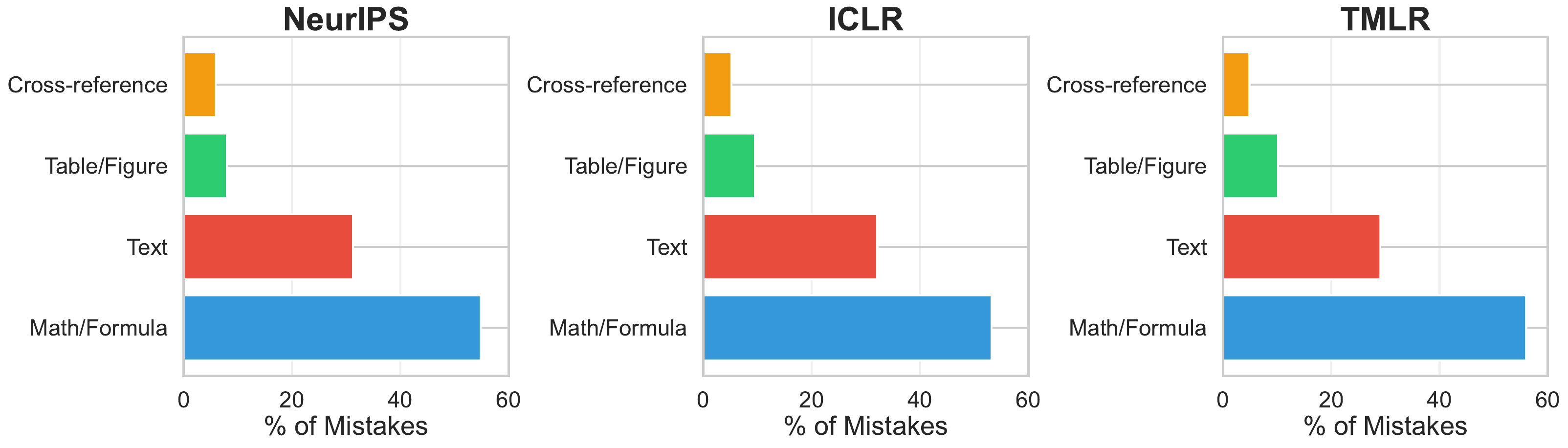}
    \caption{Percentage of mistakes by category for NeurIPS (left), ICLR (middle), and TMLR (right). The distributions of mistake types are similar across different venues, with math and formula mistakes being the most prominent category in published AI papers. We provide several representative examples the AI Checker identified in Section~\ref{sec:example}.}
    \label{fig:error_categories}
\end{figure}

\section{Results}
\subsection{Quantifying errors in published papers}

Our analysis (see Figure~\ref{fig:error_rates}) of 2,100 papers from ICLR (2018--2025) and NeurIPS (2021--2025), plus 400 papers from TMLR (2022--2025), shows that published AI research papers contain an average of $4.66$ mistakes with a standard error of $0.04$, with ICLR, NeurIPS, and TMLR averaging 4.49, 4.68, and 5.28 mistakes respectively. Overall, 2481 of the 2500 papers (99.2\%) contain at least one mistake flagged by the checker.
Notably, 23.8\% of ICLR papers, 30.8\% of NeurIPS papers, and 36.0\% of TMLR papers contain at least one potentially substantive mistake that could affect interpretation, reproducibility, or downstream use.  A clear temporal trend is also evident: the average number of mistakes in NeurIPS papers increased from 3.8 in 2021 to 5.9 in 2025 (a 55\% increase), in ICLR papers rose from 4.1 in 2018 to 5.2 in 2025 (a 27\% increase), and TMLR papers climbed from 5.0 in 2022--2023 to 5.5 in 2025 (Figure~\ref{fig:error_rates}). These results suggest that the error burden in top-tier AI papers has grown steadily over time.

The AI Checker categorized the detected mistakes into four types: Math/Formula (54.0\%), Text (31.4\%), Table/Figure (9.3\%), and Cross-reference errors (5.3\%), with similar distributions observed across all venues (Figure~\ref{fig:error_categories}). Human assessment confirmed that the AI classification of mistake categories is reliable with an accuracy of 84\%. The predominance of mathematical mistakes is particularly noteworthy, given their direct implications for correctness, reproducibility, and rigor.

To control for differences in the appendix length of papers, we conducted an additional experiment using the AI Checker to assess only the main text of each NeurIPS paper (i.e. the first 10 pages). The AI Checker also identified an increase over time in the average number of mistakes per paper and in the fraction of papers with one or more potentially substantive mistakes (Figure~\ref{fig:error_rates_nuerips_10_pages}). 

\begin{figure}
    \centering
    \includegraphics[width=0.875\textwidth]{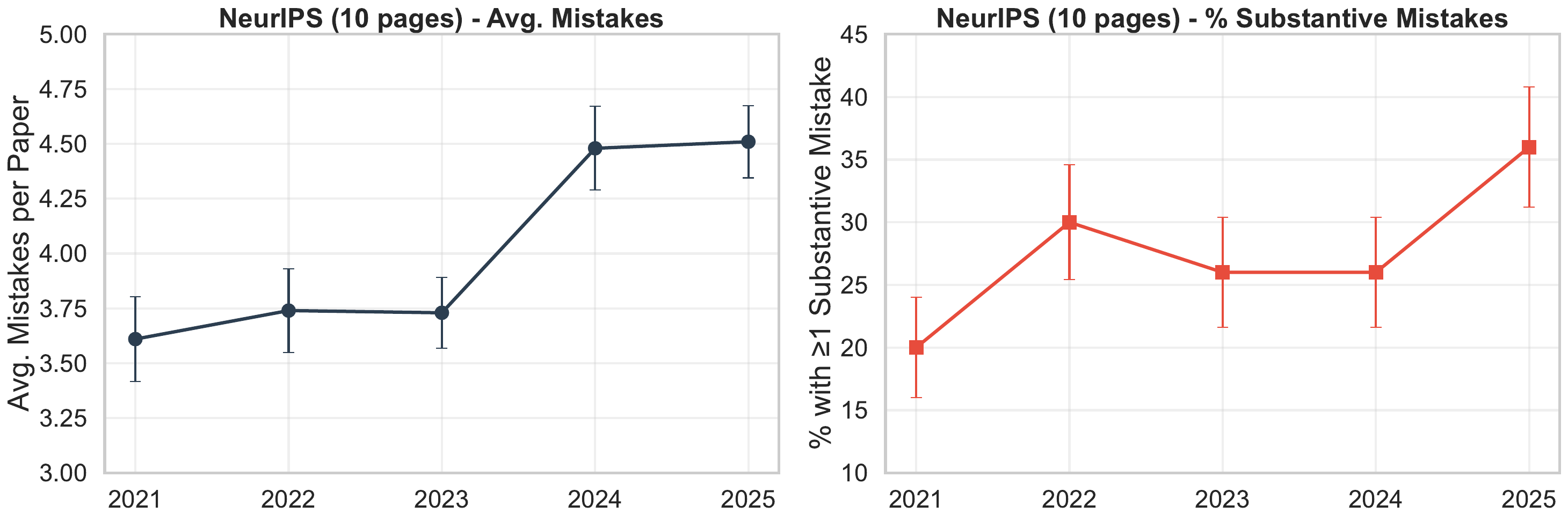}
    \caption{AI Checker detected mistakes in published NeurIPS papers when reviewing only the first 10 pages of each paper to control for paper length. The average number of detected mistakes per paper (left); The percentage of papers with $\geq 1$ potentially substantive mistake (right).  Error bars represent the standard errors. Even after accounting for paper length, the percentage of papers with at least one potentially substantive mistake have increased over time.}
    \label{fig:error_rates_nuerips_10_pages}
\end{figure}

\subsection{Precision and recall of the AI Correctness Checker}
In our validation set of 60 randomly selected papers, human researchers manually examined each of the 316 potential mistakes identified by the AI Checker and confirmed that 263 are genuine mistakes in the papers. This corresponds to a precision of 83.2\% for the AI Checker. Among the 263 confirmed mistakes, the AI Checker and human reviewers annotated a similar number as more substantive mistakes---76 by AI and 86 by human, with overlap of 62 mistakes. 

In our recall analysis, the AI Checker achieves an overall recall of 60.0\% across the 90 injected mistakes. Recall differs substantially across mistake categories: the AI Checker detects Math/Formula mistakes most reliably (66.7\%), followed by Table/Figure mistakes (61.9\%), whereas Text (55.9\%) and Cross-reference mistakes (53.8\%) remain more challenging (Figure~\ref{fig:recall-per-mistake}). This pattern indicates that the AI Checker is more effective when errors occur in structured mathematical expressions, whereas mistakes embedded in narrative text or cross-references are harder for the system to capture. The imperfect recall suggests that the AI Checker might provide a conservative estimate of the number of mistakes in papers. 

\begin{figure}[t]
    \begin{subfigure}{0.5\textwidth}
        \begin{center}
            \includegraphics[width=0.85\linewidth]{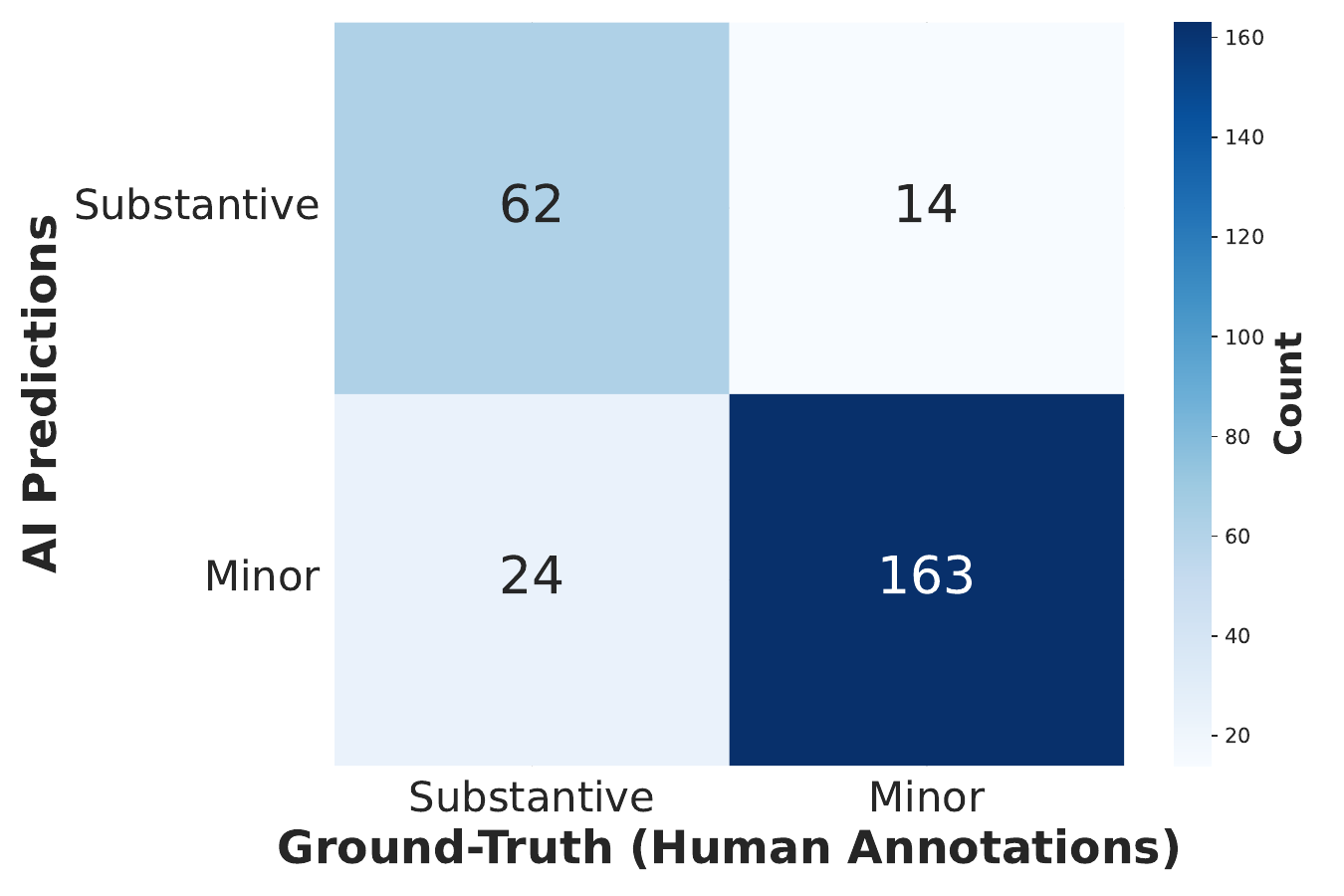}
            \caption{Contingency table of confirmed mistakes}    
            \label{fig:precision-contingency}
        \end{center}
    \end{subfigure}
    \hfill
    \begin{subfigure}{0.5\textwidth}
        \begin{center}
            \includegraphics[width=\linewidth]{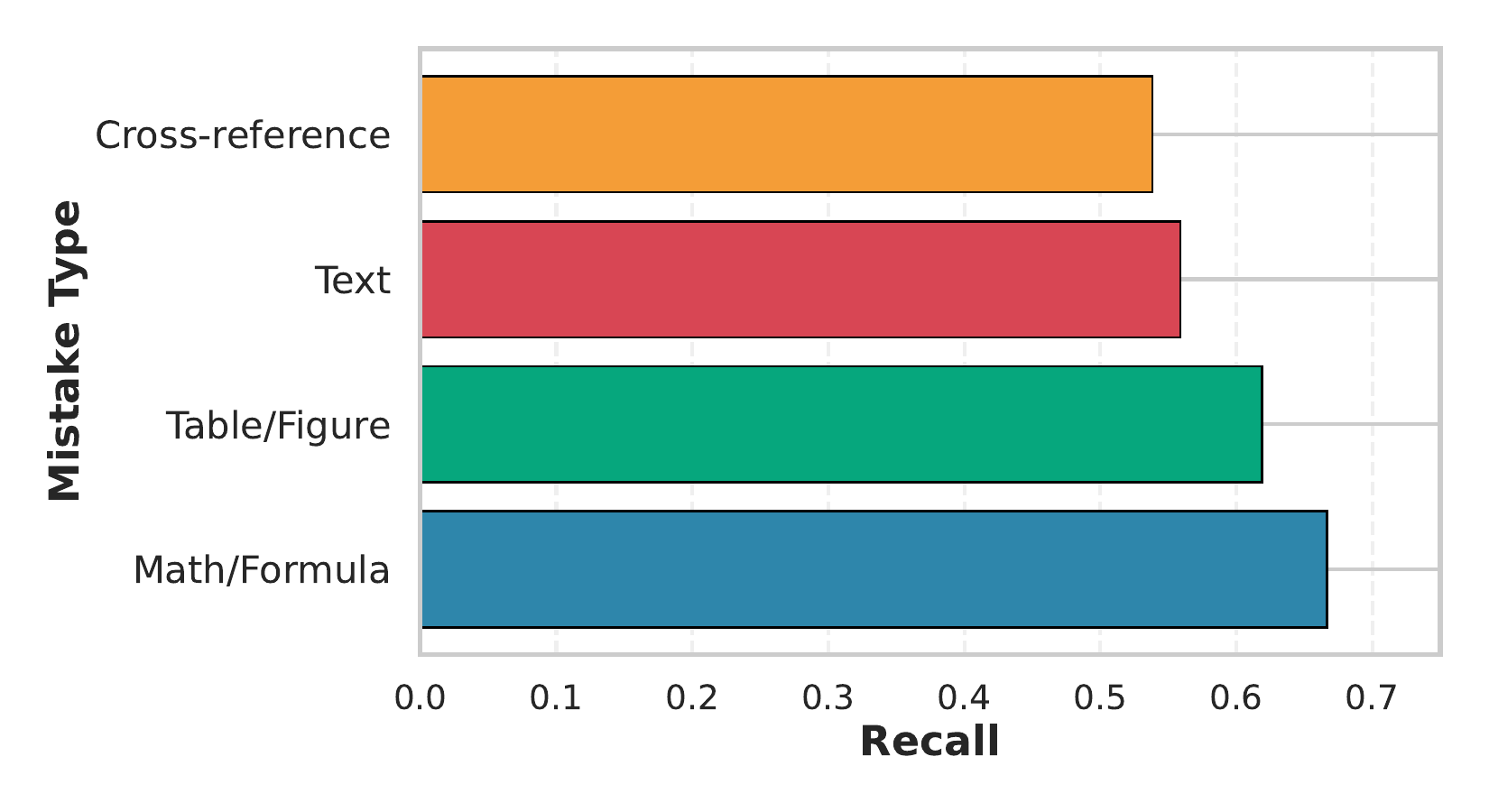}
            \caption{Recall across mistake categories}
            \label{fig:recall-per-mistake}
        \end{center}
    \end{subfigure}
    \caption{Human-verified evaluation of our AI Correctness Checker performance. Contingency table of the 263 mistakes identified our AI Checker and confirmed by humans (left) and recall across mistake categories on the 90 injected mistakes (right). Overall, the AI Checker shows relatively high precision in identifying actual mistakes---263 of 316 flagged issues are genuine mistakes. Detecting all the mistakes in a paper is more challenging. Therefore, the number of mistakes identified by the AI Checker can be interpreted a conservative lower estimate, as unflagged mistakes may still remain.}
    \label{fig:human-verification}
\end{figure}

\subsection{Examples of AI Checker-identified mistakes}
\label{sec:example}

In this section we present several representative examples of mistakes in published papers that were identified by the AI Checker. For each mistake, we provide a paraphrased excerpt from the original paper, the AI Checker's reasoning for why this is a mistake, and the human reviewer's assessment. We provide additional examples in Appendix~\ref{app:additional_examples}.

\begin{mistake}[title=Data size overstated by 100 times (mistake in Text)]
\textbf{Conference/Topic: } ICLR2018/Word Embedding\\

\textbf{Paraphrased excerpt from the original paper: }\\
The XYZ dataset includes 30 million pairs, each accompanied by an automatically estimated quality score. For model training we use only the first 15 million pairs.
\\

\textbf{AI Checker Analysis:} \\
Error: XYZ dataset size is stated as 30 million pairs and that the authors use the first 15 million for training; the XYZ corpus is $\sim$0.3 million pairs, not 30 million. Location: page 6, Dataset paragraph (“The XYZ dataset includes 30 million pairs… we use only the first 15 million pairs.”).\\

\textbf{Human Reviewer Remarks:} \\
The AI Checker Analysis is correct. The original XYZ dataset has $\sim$0.3 million pairs, not 30 million pairs. Here, AI was able to ``fact check'' authors' claims using knowledge from the relevant literature. Our fact checker pipeline did not have access to a search tool, so this knowledge must have come from training. Both the human reviewer and the AI Checker annotated this a substantive mistake because it could mislead or confuse readers.
\end{mistake}

\begin{mistake}[title=Flawed reasoning about injective functions (mistake in Math/Formula)]
\textbf{Conference/Topic : } ICLR2021/Graph Neural Networks\\

\textbf{Paraphrased excerpt from the original paper:} \\
Let $\mathcal{G}=(V,E)$ be a DAG. For a node $v\in V$, let $P(v)$ denote the parents of $v$, i.e., the nodes $u$ such that the directed edge $u\to v\in E$. Each node $v$ has initial features $x_v^0$. To compute the $\ell$-th layer representations, we proceed according to the topological order on the nodes $v$ and compute
\begin{align*}
x^\ell_v &= F^\ell\big(x^{\ell-1}_v, G^\ell(\{x^\ell_u \: | \: u\in P(v)\}, x^{\ell-1}_v)\big), \quad \ell=1,\ldots,L, \\
x_\mathcal{G} &= R(\{x^\ell_v, \, \ell=0,1,\ldots,L, \, v\in T\}),
\end{align*}
where $T\subseteq V$ denotes the set of terminal nodes (i.e., those without child nodes).

\begin{theorem}
The map $\mathcal{G} \mapsto x_\mathcal{G}$ defined by the equations above is injective if $F^\ell$, $G^\ell$, and $R$ are injective as multiset functions.
\end{theorem}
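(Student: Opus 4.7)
The plan is to reduce injectivity of $\mathcal{G} \mapsto x_\mathcal{G}$ to a per-node unrolling lemma and then peel off the definition layer by layer. Concretely, suppose two DAGs $\mathcal{G}_1$ and $\mathcal{G}_2$ (with initial node features) produce the same $x_\mathcal{G}$; I want to conclude $\mathcal{G}_1 \cong \mathcal{G}_2$. First, injectivity of $R$ forces equality of the input multisets $\{x^\ell_v : \ell = 0,\ldots,L,\ v \in T\}$ for the two graphs. Bookkeeping these as multisets indexed by $(\ell, v)$ and partitioning by layer, I would extract a bijection $\sigma : T_1 \to T_2$ of terminal nodes under which $x^\ell_v = x^\ell_{\sigma(v)}$ at every layer $\ell$.

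The heart of the argument is a lemma, proved by induction on $\ell$: for every node $v$, the representation $x^\ell_v$ is an injective function of the depth-$\ell$ unrolled computation tree at $v$, defined recursively as the initial feature $x^0_v$ paired with the multiset of depth-$(\ell{-}1)$ trees of the parents of $v$. The inductive step reads $x^\ell_v$ backward: injectivity of $F^\ell$ recovers both $x^{\ell-1}_v$ and the value $G^\ell(\{x^\ell_u\}_{u \in P(v)}, x^{\ell-1}_v)$; injectivity of $G^\ell$ then recovers the multiset $\{x^\ell_u : u \in P(v)\}$; finally the inductive hypothesis converts each $x^\ell_u$ into the depth-$\ell$ tree at $u$. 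Applying this lemma through the bijection $\sigma$ shows that $\mathcal{G}_1$ and $\mathcal{G}_2$ have matching unrolled trees at every terminal node, hence are isomorphic as feature-labeled DAGs, provided $L$ is large enough that these trees cover every ancestor of every terminal.

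The main obstacle is the bookkeeping in the first step: as literally written, $R$ is fed a multiset of raw representations with no explicit layer or node labels, so partitioning by $\ell$ and extracting $\sigma$ requires an implicit assumption such as disjointness of per-layer representation spaces, or that $R$ operates on layer-tagged inputs. I would make this assumption explicit, and flag two caveats standard in this literature: \emph{injective} should be read up to graph isomorphism (an unavoidable quotient, since the architecture is permutation-equivariant in the node indexing), and $L$ must be at least the length of the longest directed path, since two DAGs that differ only at ancestors farther than $L$ from every terminal cannot be distinguished by a depth-$L$ architecture of this form.
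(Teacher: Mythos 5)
The theorem you set out to prove is, in fact, false, and the paper presents it precisely as an example of an incorrect published claim detected by the AI Checker. The human reviewer's remarks in that example supply an explicit counterexample: let $\mathcal{G}$ and $\mathcal{G}'$ each consist of two isolated nodes $u,v$ (so $P(u)=P(v)=\emptyset$ and $T=\{u,v\}$ in both), with initial features $(x^0_u,x^0_v)=(0,2)$ in $\mathcal{G}$ and $(1,3)$ in $\mathcal{G}'$. With $L=1$, choosing $G(\emptyset,i)=i$ and $F(i,i)=\sigma(i)$ where $\sigma$ swaps $0\leftrightarrow 1$ and $2\leftrightarrow 3$ gives injective $F,G$, yet both graphs hand $R$ the identical flat multiset $\{0,1,2,3\}$, hence $x_{\mathcal{G}}=x_{\mathcal{G}'}$ even though the two feature-labeled DAGs differ.

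You actually circle the correct culprit: you observe that $R$ receives a flat multiset with no $(\ell,v)$ tags, so your first step---partitioning by layer and extracting a bijection $\sigma$ between terminal node sets---does not follow from injectivity of $R$ alone. But you then treat this as a repairable bookkeeping nuisance, to be patched by an ``implicit assumption'' (layer-tagged inputs or disjoint per-layer codomains), and you frame the remaining caveats (reading injectivity up to isomorphism, requiring $L$ large enough) as routine. Neither caveat rescues the statement: in the counterexample $L=1$ already exceeds the length of the longest directed path, and the two graphs are non-isomorphic as feature-labeled DAGs, so quotienting by isomorphism does not help. The flat-multiset pooling by $R$ is the fatal step; once you grant yourself layer-and-node-tagged inputs you are proving a different (and possibly true) statement, not the one written. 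The appropriate output here is a disproof, which is exactly what the paper gives, rather than a proof sketch conditioned on assumptions that contradict what the architecture actually computes.
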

\begin{proof}
Suppose two graphs $\mathcal{G}$ and $\mathcal{G}'$ have the same representation $x_\mathcal{G} = x_{\mathcal{G}'}$. Then, from the function $R$, they must have the same target set $T$ and same node representations $x^\ell_v$ for all nodes $v\in T$ and all layers $\ell$...
\end{proof}

\textbf{AI Checker Analysis:} \\
Error: Flawed step in Theorem 1 proof—concluding from $x_\mathcal{G} = x_{\mathcal{G}'}$ and injectivity of $R$ that the two graphs have the same target set $T$ and identical per-layer node representations for each $v\in T$ across graphs. Injectivity of a permutation-invariant multiset function $R$ only implies equality of the pooled multiset, not equality of the target node sets or node-wise correspondences. The subsequent reconstruction argument depends on this unjustified identification. Location: Appendix A, Proof of Theorem 1 (Page 13).\\

\textbf{Human Reviewer Remarks:} \\
In this case, we can construct a counterexample showing that the theorem is false as stated. Thus, this is not merely a matter of a mistaken proof technique which could potentially be salvaged; rather, the conclusion is provably incorrect. \\

Consider the following two graphs: $\mathcal{G}$ and $\mathcal{G}'$ both consist of two isolated nodes  (i.e., nodes with neither children nor parents). We will refer to these nodes as $u$ and $v$. Note that in both graphs, we have $P(u) = P(v) = \emptyset$ and $T = \{u, v\}$. In $\mathcal{G}$, we define $x^0_u = 0$ and $x^0_v = 2$. In $\mathcal{G}'$, we define $x^0_u = 1$ and $x^0_v = 3$. Note that clearly $\mathcal{G} \neq \mathcal{G}'$ since the features for these two graphs are distinct.

We now consider a single-layer GNN of the form considered in the paper. Denote $F = F^1$ and $G = G^1$, and define these maps as follows:
\[
F(0, 0) = 1, \quad F(1, 1) = 0, \quad F(2, 2) = 3, \quad F(3, 3) = 2, \quad G(\emptyset, i) = i, \: i=0,\ldots, 3.
\]
Note that $F$ and $G$ defined in this manner are injective. We now compute the resulting representations on $\mathcal{G}$ and $\mathcal{G}'$. For $\mathcal{G}$, we have:
\[
x^1_u = F(x^0_u, G(\emptyset, x^0_u)) = F(0, G(\emptyset, 0)) = F(0, 0) = 1,
\]
\[
x^1_v = F(x^0_v, G(\emptyset, x^0_v)) = F(2, G(\emptyset, 2)) = F(2, 2) = 3.
\]
Thus the final representation is 
\[
x_{\mathcal{G}} = R(\{x^\ell_s \: | \: \ell\in\{0,1\}, s\in \{u, v\}\}) = R(\{x^0_u, x^1_u, x^0_v, x^1_v\}) = R(\{0, 1, 2, 3\}).
\]
For $\mathcal{G}'$, we have:
\[
x^1_u = F(1, G(\emptyset, 1)) = F(1, 1) = 0, \quad \quad
x^1_v = F(3, G(\emptyset, 3)) = F(3, 3) = 2.
\]
Thus the final representation is
\[
x_{\mathcal{G}'} = R(\{x^0_u, x^1_u, x^0_v, x^1_v\}) = R(\{1, 0, 3, 2\}) = x_{\mathcal{G}}.
\]
Therefore, this counterexample clearly shows that the map $\mathcal{G} \mapsto x_\mathcal{G}$ is not necessarily injective even if $F$, $G$, and $R$ are injective multiset functions. Both the human reviewer and the AI checker annotated this a substantive mistake.
\end{mistake}

\begin{mistake}[title=Wrong linear algebra theory (mistake in Math/Formula)]
\textbf{Conference/Topic: } NeurIPS2023/Collaborative Filtering\\

\textbf{Paraphrased excerpt from the original paper: }\\
(in Corollary 5)\\
By definition, $\Sigma_{A}$ and $\Sigma_{B}$ are positive semidefinite, therefore so is $\Sigma_{A}\Sigma_{B}$. \\

\textbf{AI Checker Analysis:} \\
Error: Misstatement that the product of covariance matrices is positive semidefinite. Corollary 5 says `By definition, $\Sigma_{A}$ and $\Sigma_{B}$ are positive semidefinite, therefore so is $\Sigma_{A}\Sigma_{B}$.' In general, $\Sigma_{A} \Sigma_{B}$ need not be positive semidefinite; only $\mathrm{det} (\Sigma_{A} \Sigma_{B}) = \mathrm{det} (\Sigma_{A} ) \mathrm{det} (\Sigma_{B})$. 
Location: Corollary 5.\\

\textbf{Human Reviewer Remarks:} \\
The AI Checker Analysis is correct. The product of two positive semidefinite matrices is not guaranteed to be positive semidefinite, and thus the claim in Corollary 5 is not correct. The AI Checker annotated this a minor mistake and the human reviewer thought this is a substantive mistake.

\end{mistake}

\begin{mistake}[title=Mistakes in probability theory (mistake in Math/Formula)]
\textbf{Conference/Topic: } ICLR2020/Optimal Control \\

\textbf{Paraphrased excerpt from the original paper:} \\
We consider an optimal control problem with states $x_t$ and actions $a_t$. We assume that the data collection policy $U$ samples actions uniformly from the action space $\mathcal{A} \subseteq \mathbb{R}^d$, and that the Lebesgue measure of $\mathcal{A}$ is $\overline{A} < \infty$.
We wish to bound the following quantity: $\mathbb{E}\left[\frac1T \sum_{t=0}^{T-1} D_{\mathrm{KL}}(P(\cdot|x_t, a_t) || Q(\cdot|x_t, a_t)) \: | \: P, x_0 \right]$ for an arbitrary control sequence $\{a_t\}_{t=0}^{T-1}$. The arbitrary action sequence can be expressed as the output of a deterministic policy $\pi'$ defined by $a_t = \pi'(x_t, t)$. Therefore, the quantity of interest can be written as:
\begin{align*}
    &\mathbb{E}\left[\frac1T\sum_{t=0}^{T-1} D_{\mathrm{KL}}(P(\cdot|x_t, a_t) || Q(\cdot|x_t, a_t)) \: | \: P, x_0\right] \\
    &=\mathbb{E}\left[\frac1T\sum_{t=0}^{T-1} \int_{a_t \in \mathcal{A}} D_{\mathrm{KL}}(P(\cdot|x_t, a_t) || Q(\cdot|x_t, a_t)) d\pi'(a_t|x_t, t) \: | \: P, x_0\right] \\
    &=\mathbb{E}\left[\frac1T\sum_{t=0}^{T-1} \int_{a_t \in \mathcal{A}} D_{\mathrm{KL}}(P(\cdot|x_t, a_t) || Q(\cdot|x_t, a_t)) \cdot \frac{d\pi'(a_t|x_t, t)}{dU(a_t)}\cdot dU(a_t) \: | \: P, x_0\right] \\
    &\leq \overline{A} \cdot \mathbb{E}_{x, a}\left[D_{\mathrm{KL}}(P(\cdot|x,a)||Q(\cdot|x,a)) \right],
\end{align*}
where we compute the expectation over the state-action occupation measure 
\[
\frac1T \sum_{t=0}^{T-1} \mathbb{P}(x_t=x,\, a_t=a|x_0, U)
\]
induced by the data-sampling policy $U$. The last inequality is due to the facts that (i) $\pi'$ is a deterministic policy, (ii) $dU(a_t)$ describes the uniform sampling policy with constant Lebesgue density $1/\overline{A}$ over all control actions, (iii) the following bounds for importance sampling factor holds: $\left| \frac{d\pi'(a_t|x_t, t)}{dU(a_t)}\right| \leq \overline{A}$.\\

\textbf{AI Checker Analysis:} \\
Error: Invalid change-of-measure with a deterministic policy—use of $d\pi'/dU$ and a uniform bound $\leq \overline{A}$ when $\pi'$ is singular w.r.t. Lebesgue measure. This renders inequality (10) unjustified and undermines the bounds used in Lemmas 1–3. Location: Appendix A.1, Eq. (10) and the derivations following it (pages 11–12); analogous dependence in Appendix A.2 (Eq. (14), page 13) and Appendix A.4 (Eq. (19), page 15). \\

\textbf{Human Reviewer Remarks:} \\
Since $\pi'$ is a deterministic policy, it places probability mass on a single point. In particular, this implies $\pi'$ is not absolutely continous with respect to a continuous uniform distribution $U$ (note $U$ must be continuous since it is stated to have a Lebesgue density $1/\overline{A}$), and thus the Radon-Nikodym derivative $\frac{d\pi'(a_t|x_t, t)}{dU(a_t)}$ does not exist. The final inequality therefore does not hold, invalidating the corresponding lemma. In this paper, this same mistake actually invalidates three out of four of the main theoretical results. Both the AI Checker and the human reviewer annotated a substantive mistake.
\end{mistake}

\begin{mistake}[title=Incorrect justification of the main objective function (mistake in Math/Formula)]
\textbf{Conference/Topic: } ICLR2022/Few-shot Learning\\

\textbf{Paraphrased excerpt from the original paper:} \\
(page 5)\\
Let $\mathcal{S}_{target}$ denote a target dataset and $\mathcal{S}_{source}$ denote a large-scale source dataset. Our objective is to optimize the parameter $\psi$. We formulate this as:
\begin{align*}
    \mathrm{argmax}_{\psi} \log p( \psi \mid \mathcal{S}_{target}, \mathcal{S}_{source}) \approx \mathrm{argmax}_{\psi} [\log p( \psi \mid \mathcal{S}_{target}, \omega) + \log p(\omega \mid \mathcal{S}_{source})]
\end{align*}
where $\omega$ represents a prior pre-trained model (i.e., trained on the source data $\mathcal{S}_{source}$; specifically, we utilize a point estimate $\hat{\omega}$ fitted on $\mathcal{S}_{source}$ (See page 15).\\
\\
(page 15)\\
Assume that $\psi \perp \mathcal{S}_{source} \mid \omega$, the derivation proceeds as follows:
\begin{align*}
 \mathrm{argmax}_{\psi} \log p (\psi | \mathcal{S}_{target}, \mathcal{S}_{source}) 
=  \mathrm{argmax}_{\psi}~ \int_{\omega} [ \log p (\psi | \mathcal{S}_{target}, \omega) + \log p (\omega | \mathcal{S}_{source}) ] d\omega  
\end{align*}

\textbf{AI Checker Analysis:} \\
Error: Incorrect replacement of log of an integral with the integral of logs in the derivation. Location: page 15: $\mathrm{argmax}_{\psi} \log p (\psi | \mathcal{S}_{target}, \mathcal{S}_{source}) = \mathrm{argmax}_{\psi} \int_{\omega} \log p( \psi \mid \mathcal{S}_{target}, \omega) + \log p(\omega \mid \mathcal{S}_{source}) d\omega$; referenced by Equation on page 5.\\

\textbf{Human Reviewer Remarks:} \\
As stated in AI Checker Analysis, the first equation of the derivation on page 15 is non‑trivial and, in general, incorrect. 
If we make an additional assumption that $\psi \perp \mathcal{S}_{source} \mid \mathcal{S}_{target}, \omega$ (contrast this with the paper's original assumption $\psi \perp \mathcal{S}_{source} \mid \omega$), then by the law of total probability, $p(\psi \: | \: \mathcal{S}_{target}, \mathcal{S}_{source})$ can be written as
\begin{align}
    p(\psi  |  \mathcal{S}_{target}, \mathcal{S}_{source}) &= \int_{\omega} p(\psi  |  \mathcal{S}_{target}, \mathcal{S}_{source}, \omega) p(\omega  |  \mathcal{S}_{target}, \mathcal{S}_{source}) \, d\omega \nonumber \\
    &= \int_{\omega} p(\psi  |  \mathcal{S}_{target}, \omega) p(\omega  |  \mathcal{S}_{source}) \, d\omega. \label{eq: domain adaptation error}
\end{align}
The second equality uses 
\[
p(\psi  |  \mathcal{S}_{target}, \mathcal{S}_{source}, \omega) = p(\psi  |  \mathcal{S}_{target}, \omega)
\]
because $\psi \perp \mathcal{S}_{source} | \mathcal{S}_{target}, \omega$, as well as 
\[
p(\omega  |  \mathcal{S}_{target}, \mathcal{S}_{source}) = p(\omega  |  \mathcal{S}_{source})
\]
because $\omega \perp \mathcal{S}_{target} | \mathcal{S}_{source}$ (recall that $\omega$ represents model parameters pre-trained on the source data $\mathcal{S}_{source}$ before any adaptation to the target domain has occurred using $\mathcal{S}_{target}$, so it can be assumed independent of $\mathcal{S}_{target}$ given $\mathcal{S}_{source}$, though we remark that this is also not stated explicitly in the paper).\\

We observe that equation \eqref{eq: domain adaptation error} is (visually) almost equivalent to the equation presented in the paper; however, to obtain the paper's stated equation, we would need to take the log of both sides of \eqref{eq: domain adaptation error} (a valid operation), then interchange the integral and the log, which is not a valid operation. Therefore the paper's claim 
\begin{align*}
 \mathrm{argmax}_{\psi} \log p (\psi | \mathcal{S}_{target}, \mathcal{S}_{source}) 
=  \mathrm{argmax}_{\psi}~ \int_{\omega} [ \log p (\psi | \mathcal{S}_{target}, \omega) + \log p (\omega | \mathcal{S}_{source}) ] d\omega  
\end{align*}
is not correctly justified. The human reviewer annotated this as a substantive mistake and the AI Checker annotated it as minor.

\end{mistake}

\begin{mistake}[title=Inconsistency between text and table (mistake in Table/Figure)]
\textbf{Conference/Topic: } ICLR2022/Robotics Simulation\\

\textbf{Paraphrased excerpt from the original paper:} \\
(in page 5)\\
For every skill pairing, we draw 1,000 random initializations and then run 500 Adam optimization steps on each initialization.\\

(in page 10)
\begin{center}
    \begin{tabular}{lll}
        \toprule
        Skill pairing using Adam \\
        \hline
        N initial seeds & 1000\\
        N iteration & 1000\\
        \bottomrule
    \end{tabular}
\end{center}

\textbf{AI Checker Analysis:} \\
Error: Inconsistent number of Adam iterations for skill pairing. The main text says "run 500 Adam optimization steps" per initialization, while Table lists "N iteration 1000". Location: page 5 vs. Table in page 10.\\

\textbf{Human Reviewer Remarks:} \\
The AI Checker Analysis is correct. The paper inconsistently reports the number of iterations for the Adam optimizer. The manuscript reports 500, while Table reports 1000. Both the AI Checker and the human reviewer annotated this a  minor mistake.

\end{mistake}

\subsection{False positives by AI Checker}
\label{sec:false_positives}
False positives produced by the AI Checker were typically due to the use of non-standard notation or OCR errors. Below, we present several representative examples. 

\begin{example}[LLM confused by non-standard notation]
One paper used the letter $Z$ to denote a marginal probability vector when describing the Wasserstein distance. The notation $Z$ was explicitly and precisely defined in the paper, yet our AI Checker flagged an error, asserting that $Z$ is not a marginal probability vector. 
We believe this incorrect identification likely arises because the literature commonly uses $p$, $P$, or $\mathbb{P}$ to denote a probability vector, while $Z$ is commonly used for a logit vector. AI systems, relying on this convention, may have been misled by the non-standard notation and flagged it as an error.
\end{example}

\begin{example}[OCR in Math/Formula]
One paper presented a probabilistic inequality that provides an upper bound on the gap between sample statistics and the corresponding population parameter, as a function of the sample size $n$. The convergence rate for this bound was shown to be $O(1/\sqrt{n})$, which is mathematically correct, but the OCR mistakenly omitted the square root, reporting the rate as $O(1/n)$, which was flagged as a mistake. 
\end{example}

\begin{example}[OCR in an Algorithm box]
In another paper, the authors used different levels of indentation to express nested ``if-else'' conditional statements in an algorithm box, as shown below.
\begin{algorithm}[h]
 \caption{Example of an algorithm that uses multiple indentation levels.} 
\begin{algorithmic}
    \If{Condition 1}
        \State Statement 1
        \If{Condition 2}
        \State Statement 2
        \EndIf
    \Else
        \State Statement 3
    \EndIf
\end{algorithmic}
\end{algorithm}
\\
The AI Checker incorrectly paired the ``else'' statement with Condition 2, i.e., assuming that Statement 3 executes when Condition 1 is met but Condition 2 fails, instead of the intended behavior of Statement 3 executing when Condition 1 fails.
\end{example}

Because of these issues, it is important to interpret the AI Checker's output as possible mistakes until they are verified by humans. Fortunately, in our experience, it is relatively easy to identify OCR issues from the AI Checker's reasoning.

\subsection{Fixing mistakes with AI Checker}
Once the AI Checker identifies a mistake, it also attempts to propose a solution to correct the issue. Among a subset of 240 mistakes that were both flagged by the Checker and verified as genuine by human reviewers, the Checker suggested fixes for 207 of those mistakes (86.3\%), while for the remaining 33 mistakes it reported “No immediate fix.” The latter cases typically involve contradictions that are difficult to resolve or issues that would require substantial rewriting of the paper. In such situations, the “No immediate fix” outcome can itself serve as a signal of the severity of the underlying problem.

Human researchers then manually evaluated each of the 207 fixes proposed by the AI Checker and confirmed that 157 of those 207 (75.8\%) were correct and adequately addressed the identified mistake. This finding is consistent with the fact that most mistakes are relatively localized. It also suggests that the AI Checker can serve as a useful assistant correcting common technical issues in research papers. We give an example of the Checker providing a correct fix to a substantive error below.

\begin{mistake}[title=Fixing incorrect proof (mistake in Math/Formula)]
\textbf{Conference/Topic: } ICLR2024/Contrastive Learning\\

\textbf{Paraphrased excerpt from the original paper: }\\
(page 5; definition)\\
We define the norm on a matrix $M \in \mathbb{R}^{h \times H}$ as $\| M\| := \sum_{1 \le j < k \le H} G(M_j - M_k) $ for some norm $G$ on $\mathbb{R}^h$.\\

(page 10; proof of Theorem 1)\\
For homogeneity, for any $\mu \in \mathbb{R}$,
\begin{align}
\|\mu M\|
&= \sum_{1 \le j < k \le H} G(\mu M_j - \mu M_k) \notag \\
&= \sum_{1 \le j < k \le H} G(\mu (M_j - M_k)) \notag \\
&= \sum_{1 \le j < k \le H} \mu\, G(M_j - M_k) \tag{7} \\
&= \mu \|M\|. \notag
\end{align}

For the triangle inequality, for any $P, Q \in \mathbb{R}^{h \times H}$,
\begin{align}
\|P + Q\|
&= \sum_{1 \le j < k \le H} G\big((P_j + Q_j) - (P_k + Q_k)\big) \notag \\
&= \sum_{1 \le j < k \le H} G\big((P_j - P_k) + (Q_j - Q_k)\big) \notag \\
&= \sum_{1 \le j < k \le H} \Big( G(P_j - P_k) + G(Q_j - Q_k) \Big)  \tag{8} \\
&= \sum_{1 \le j < k \le H} G(P_j - P_k)
 \;+\; \sum_{1 \le j < k \le H} G(Q_j - Q_k) \notag \\
&= \|P\| + \|Q\|. \notag
\end{align}

This completes the proof.\\

\textbf{AI Checker Analysis:} \\
Error: The proof of Theorem 1 contains objective mistakes: (i) homogeneity is written as $\norm{\mu M}_{1} = \mu \norm{ M}_{1}$ (missing absolute value) and (ii) the triangle property is handled by using $G(a+b) = G(a)+G(b)$ instead of the triangle inequality $G(a+b) \leq G(a)+G(b)$. Location: Appendix B, Eqs. (7)–(8) on page 10.\\

\textbf{AI Suggested Fix:} \\
Appendix (p. 10): Correct Eqs. (7)–(8) as follows.\\
- Eq. (7) (homogeneity): replace $\norm{\mu M}_{1} = \mu \norm{ M}_{1}$ with $\norm{\mu M}_{1} = |\mu| \norm{ M}_{1}$ (for all $\mu \in \mathbb{R}$). \\
- Eq. (8) (triangle property): replace $G(a+b) = G(a)+G(b)$ with $G(a+b) \leq G(a)+G(b)$. \\
No other changes are required to these equations.\\

\textbf{Human Reviewer Remarks:} \\
The AI Checker Analysis is correct, and the AI Suggested Fix directly resolves the problem. With this correction, the proof of Theorem 1 becomes factually correct.
\end{mistake}

\section{Discussion}
The rapid growth in both the pace and volume of AI publications has made it increasingly difficult for authors, reviewers, and readers to filter out all mistakes that appear in research papers. As deadlines compress and the research landscape accelerates, even diligent authors and reviewers can overlook errors that would have been caught with more time and bandwidth. In this work, we demonstrate that frontier language models can play a meaningful role in addressing this gap by assisting in the systematic identification of objective mistakes in published AI papers. Using this tool, we quantify the prevalence of such mistakes in top AI venues and observe that the number of objective errors is non-negligible—and, notably, appears to be increasing over time alongside the growth of the field, which is a cause for concern.

Most of the issues detected by the LLM are relatively minor and do not alter the main conclusions of the corresponding papers. Nevertheless, correcting these issues can meaningfully improve clarity for readers, reduce confusion during reproduction attempts, and strengthen the overall reliability of the literature. Importantly, the AI Correctness Checker also surfaced substantive mistakes that could plausibly alter results or interpretations. In our validation study of 60 papers, human experts confirmed that 263 of 316 potential issues flagged by the AI Checker are genuine mistakes in the paper. They further annotated 86 of these mistakes as substantive mistakes. The AI Checker’s ability to flag and suggest fixes to such cases at scale provides a powerful starting point for more thorough verification.

There are several limitations to the AI Checker. First, it has false positives, typically due to OCR issues and some due to incomplete reasoning. We expect these issues to decrease as the vision and document-understanding capabilities of LLMs continue to advance, but at present, flagged items should be interpreted as potential mistakes rather than definitive errors until verified by human experts. Second, the AI Checker can also fail to detect genuine mistakes, meaning that our estimates may be conservative and the true rate of errors could be higher. Finally, although our analysis focuses on objective mistakes with clearly verifiable ground truth, the perceived significance of each issue is inherently more subjective. In practice, we observe disagreements both between the AI Checker and human annotators, and among the human annotators themselves, regarding whether a given mistake should be classified as substantive. For this reason, the exact counts of substantive mistakes should be interpreted with caution. Nonetheless, we believe that the overall trends (i.e., an increasing average number of mistakes and a rising fraction of papers containing at least one substantive mistake) are robust.      

Our approach is best understood as complementing, rather than replacing, human reviewers. One practical path forward is a hybrid workflow in which an LLM correctness checker highlights potential objective mistakes, and human reviewers then focus on evaluating their validity, severity, and broader implications. This would allow reviewers to devote more of their limited capacity to evaluating the subjective dimensions of a submission (e.g., novelty, technical significance, and conceptual insight) while still ensuring that objective correctness receives systematic attention. Researchers can also use the AI Checker to detect and correct technical issues before official submission. The AI Checker is fast and inexpensive, costing less than \$0.50 per paper, making it an accessible assistant to authors and reviewers. 

Finally, correcting mistakes in the published literature serves a broader purpose. Scientific research is cumulative, and future work frequently builds atop published claims, formulas, and experimental findings. Revealing and correcting errors strengthens the foundations upon which new research is constructed, reduces the risk of misinterpretation, and enhances reproducibility. We do not view our results as diminishing the efforts of researchers or the value of peer review. Rather, they highlight the inherent complexity of modern AI papers—where mathematical reasoning, algorithmic design, and empirical evaluation intersect—and illustrate how LLM-based tools can help make both new and existing work more robust. Our findings suggest a promising role for LLM assistance in improving the rigor and reliability of the scientific record as the field continues to grow. Code for the AI Checker is available on our GitHub repository.\footnote{\url{https://github.com/togethercomputer/reviewing-agents}}

\section*{Acknowledgements}
We thank Eshaan Nichani for helpful discussions on the correctness of several proofs in the examples.

\bibliographystyle{unsrt}
\bibliography{ref}

\appendix

\section{Additional Examples of AI Checker-identified mistakes}
\label{app:additional_examples}
We provide additional examples of AI Checker-identified mistakes. The additional examples are (i) incorrect derivative, (ii) wrong cross-reference, and (iii) incorrect definition. These are some of the more common mistakes identified.

\begin{mistake}[title=Incorrect derivative (mistake in Math/Formula)]
\textbf{Conference/Topic: } ICLR2021/Federated Learning\\

\textbf{Paraphrased excerpt from the original paper: }
\begin{align}
    \ell &= \log(1+e^{-\mu}) \tag{14}\\
    d\ell &= \frac{-\mu}{1+e^\mu}d\mu;\quad \frac{\partial \ell}{\partial \mu} = \frac{-\mu}{1+e^\mu} \tag{15}
\end{align}
\\

\textbf{AI Checker Analysis:} \\
Error: Wrong derivative of the logistic loss; Appendix states $ \partial \ell / \partial \mu = - \mu/(1+e^{\mu})$ and $d \ell = (-\mu/(1+e^{\mu})) d\mu$ for $\ell = \log(1+e^{-\mu})$, which is incorrect (should be $\partial \ell/ \partial \mu = -1/(1+e^{\mu}))$. Location: Appendix, Eq. (15) and the preceding line.\\


\textbf{Human Reviewer Remarks:} \\
The AI Checker Analysis is correct. $\partial \ell/ \partial \mu = -1/(1+e^{\mu})$. Both the AI Checker and the human reviewer annotated this a substantive mistake.
\end{mistake}

\begin{mistake}[title=Wrong cross-reference (Cross-Reference mistake)]
\textbf{Conference/Topic: } NeurIPS2022/Mutual Information\\

\textbf{Paraphrased excerpt from the original paper: }\\
1‑MMI clearly outperforms the others in the scenario shown in Figure 7(a), where the shared signal is one‑dimensional. This advantage disappears in Figures 7(b)–7(d), where the common component has a higher dimensionality. In fact, Figure 7(b) demonstrates that 2‑MMI usually yields the best results because it is able to capture the richer underlying structure more effectively.\\

\textbf{AI Checker Analysis:} \\
Error: Figure reference inconsistency in the independence testing section. The text refers to 'Figures 7(a)-(d)' while the relevant multi-panel figure is 'Figure 2' with subfigures (a)-(d) on the previous page. Location: Page 7, Independence testing paragraph ('For Figures 7(a) and 7(b)... In Figures 7(c) and 7(d)...'), while the actual figure is Figure 2 on page 2.\\

\textbf{Human Reviewer Remarks:} \\
The AI Checker Analysis is correct. Figure 7 does not have subfigures (a)-(d), and the corresponding paragraph explains Figure 2, not Figure 7. Both the AI Checker and the human reviewer annotated this a minor mistake.

\end{mistake}

\begin{mistake}[title=Incorrect definition (mistake in Text)]
\textbf{Conference/Topic: } ICLR2024/Multi-Agent Reinforcement Learning\\

\textbf{Paraphrased excerpt from the original paper: }\\
An unbiased estimator is one whose value converges to the true parameter as the sample size $n$ goes to the infinity.\\

\textbf{AI Checker Analysis:} \\
Error: Misdefinition and misuse of “unbiased” estimator. Footnote states “An unbiased estimator is one whose value converges to the true parameter as the sample size $n$ goes to the infinity,” which is the definition of consistency, not unbiasedness. Moreover, the stated estimators are not unbiased for finite sample size $n$. Location: Appendix B, footnote 1 and surrounding text.\\

\textbf{Human Reviewer Remarks:} \\
The AI Checker Analysis is correct. The paper incorrectly defines an unbiased estimator; what it actually describes is a consistent estimator. Both the AI Checker and the human reviewer annotated this a minor mistake.

\end{mistake}

\end{document}